\documentclass[runningheads]{llncs}

\usepackage{makeidx} 
\usepackage{amsmath,mathtools}
\usepackage{amssymb}
\usepackage{array}
\usepackage[bookmarks=false]{hyperref}
\usepackage{float}
\usepackage{graphicx}
\usepackage{macros}
\usepackage{subcaption}
\usepackage{multirow}
\usepackage{tikz}
\usepackage{color}
\usepackage{colortbl}
\usepackage{cancel}
\usepackage{relsize}
\usepackage{orcidlink}
\usepackage{shuffle}
\usepackage[vlined,boxed,linesnumbered]{algorithm2e}
\usepackage{svg}

\DeclareFontFamily{U}{mathb}{\hyphenchar\font45}
\DeclareFontShape{U}{mathb}{m}{n}{
      <5> <6> <7> <8> <9> <10>
      <10.95> <12> <14.4> <17.28> <20.74> <24.88>
      mathb10
      }{}
\DeclareSymbolFont{mathb}{U}{mathb}{m}{n}

\DeclareMathSymbol{\sqbullet}{1}{mathb}{"0D}

\makeatletter
\def\hlinewd#1{%
\noalign{\ifnum0=`}\fi\hrule \@height #1 %
\futurelet\reserved@a\@xhline}

\makeatletter
\def\old@comma{,}
\catcode`\,=13
\def,{%
  \ifmmode%
    \old@comma\discretionary{}{}{}%
  \else%
    \old@comma%
  \fi%
}
\makeatother

\begin{document}

\title{Unlocking Non-Block-Structured Decisions: Inductive Mining with Choice Graphs\thanks{The Version of Record of this contribution will be published in the proceedings of the 23rd International Conference on Business Process Management (BPM 2025). This preprint has not undergone peer review or any post-submission improvements or corrections.}}
\titlerunning{Inductive Mining with Choice Graphs}
\author{Humam Kourani\inst{1,2}\orcidID{0000-0003-2375-2152} \and
Gyunam Park\inst{1}\orcidID{0000-0001-9394-6513} \and
Wil M.P. van der Aalst\inst{1,2}\orcidID{0000-0002-0955-6940}}

\authorrunning{H. Kourani et al.}   

\institute{Fraunhofer Institute for Applied Information Technology FIT, Schloss Birlinghoven, 53757 Sankt Augustin, Germany\\
\email{\{humam.kourani,gyunam.park,wil.van.der.aalst\}@fit.fraunhofer.de} \and
RWTH Aachen University, Ahornstraße 55, 52074 Aachen, Germany}

\maketitle              
\begin{abstract}
Process discovery aims to automatically derive process models from event logs, enabling organizations to analyze and improve their operational processes.
Inductive mining algorithms, while prioritizing soundness and efficiency through hierarchical modeling languages, often impose a strict block-structured representation. This limits their ability to accurately capture the complexities of real-world processes.
While recent advancements like the Partially Ordered Workflow Language (POWL) have addressed the block-structure limitation for concurrency, a significant gap remains in effectively modeling non-block-structured decision points. 
In this paper, we bridge this gap by proposing an extension of POWL to handle non-block-structured decisions through the introduction of choice graphs. Choice graphs offer a structured yet flexible approach to model complex decision logic within the hierarchical framework of POWL. 
We present an inductive mining discovery algorithm that uses our extension and preserves the quality guarantees of the inductive mining framework.
Our experimental evaluation demonstrates that the discovered models, enriched with choice graphs, more precisely represent the complex decision-making behavior found in real-world processes, without compromising the high scalability inherent in inductive mining techniques.

\keywords{process discovery, process modeling, inductive miner}
\end{abstract}

\section{Introduction} \label{sec:intro}

Process discovery is a key branch of process mining that aims to automatically construct process models from event data, typically recorded in event logs. By uncovering the ``as-is'' process, organizations can gain insights into how their processes actually operate, identify bottlenecks, deviations from prescribed procedures, and opportunities for optimization.

A wide variety of process discovery approaches exist, each with its own strengths and weaknesses, particularly concerning scalability, model quality, and the ability to represent complex process behavior. The Inductive Miner \cite{sander} stands out as a prominent process discovery algorithm. A key characteristic of the Inductive Miner is its use of hierarchical process modeling languages for intermediate representation. This hierarchical nature offers several advantages. Firstly, the discovered models are \emph{sound} by construction, meaning that they are free from common workflow anomalies like deadlocks. Secondly, the hierarchical nature enables a recursive approach to discovery, ensuring efficient processing of large event logs. Furthermore, the generated hierarchical process models can be easily transformed into standard notations, such as BPMN \cite{DBLP:books/el/15/RosingWCM15} and Petri nets \cite{DBLP:journals/topnoc/HeeSW13a}, facilitating their use in a variety of analysis and implementation contexts.

The original Inductive Miner uses \emph{process trees} for intermediate representation. A process tree corresponds to a mathematical tree where leaves represent activities and internal node are control-flow operators that model the relationships between children. The base variant of the Inductive Miner supports four control-flow operators for modeling common process constructs: \textit{sequence}, \textit{exclusive choice}, \textit{concurrency}, and \textit{loop}. While process trees inherently guarantee soundness, their strict block-structured nature limits their expressiveness. The Inductive Miner only discovers well-nested blocks, meaning that every branching split (XOR/AND) has a corresponding matching join in the model. 

Let's consider an example of a retailer's order fulfillment process. This retailer handles two types of orders: those for in-stock items that the retailer sells but does not produce itself, and customized orders that require production. The ground truth process model for this process is shown in the BPMN notation in \autoref{fig:bpmn}. After receiving an order, the process starts with a decision point to determine the order type. The production sub-process follows a non-block-structured path involving gathering production materials, scheduling production, notifying the customer about the expected delivery date, and executing production. In this sub-process, notifying the customer is independent of gathering the production materials and executing the production, but it happens after scheduling the production so the delivery date can be estimated. Executing the production happens after both gathering the production materials and scheduling the production. This type of non-block-structured concurrency cannot be modeled in a process tree. For example, the tree shown in \autoref{fig:tree} is the result of applying the base Inductive Miner to an event log that simulates this process. This process tree overgeneralizes the process, allowing production to be executed before scheduling it or collecting the required materials.

\begin{figure}[!t]
\centering
    \begin{subfigure}{\textwidth}
        \centering
        \includegraphics[width=\textwidth]{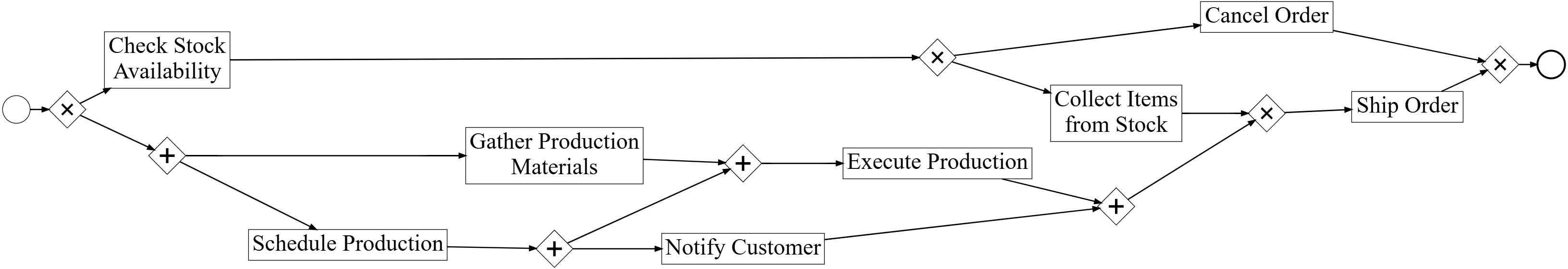}
        \caption{Ground truth BPMN model.}
        \label{fig:bpmn}
    \end{subfigure}
   
    \begin{subfigure}{\textwidth}
        \centering  \includegraphics[width=0.8\textwidth]{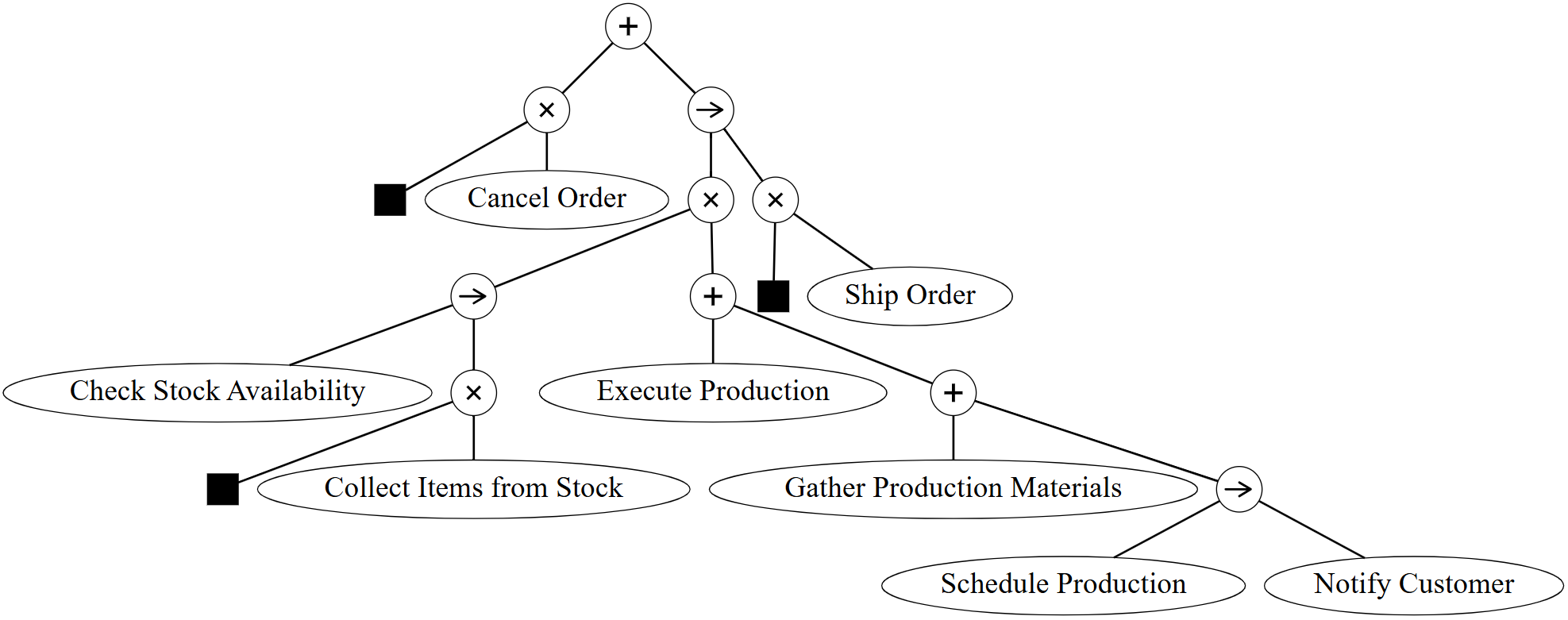}
        \caption{Process tree discovered with the base Inductive Miner \cite{sander}.}
        \label{fig:tree}
    \end{subfigure}

    \begin{subfigure}{\textwidth}
        \centering      \includegraphics[width=0.85\textwidth]{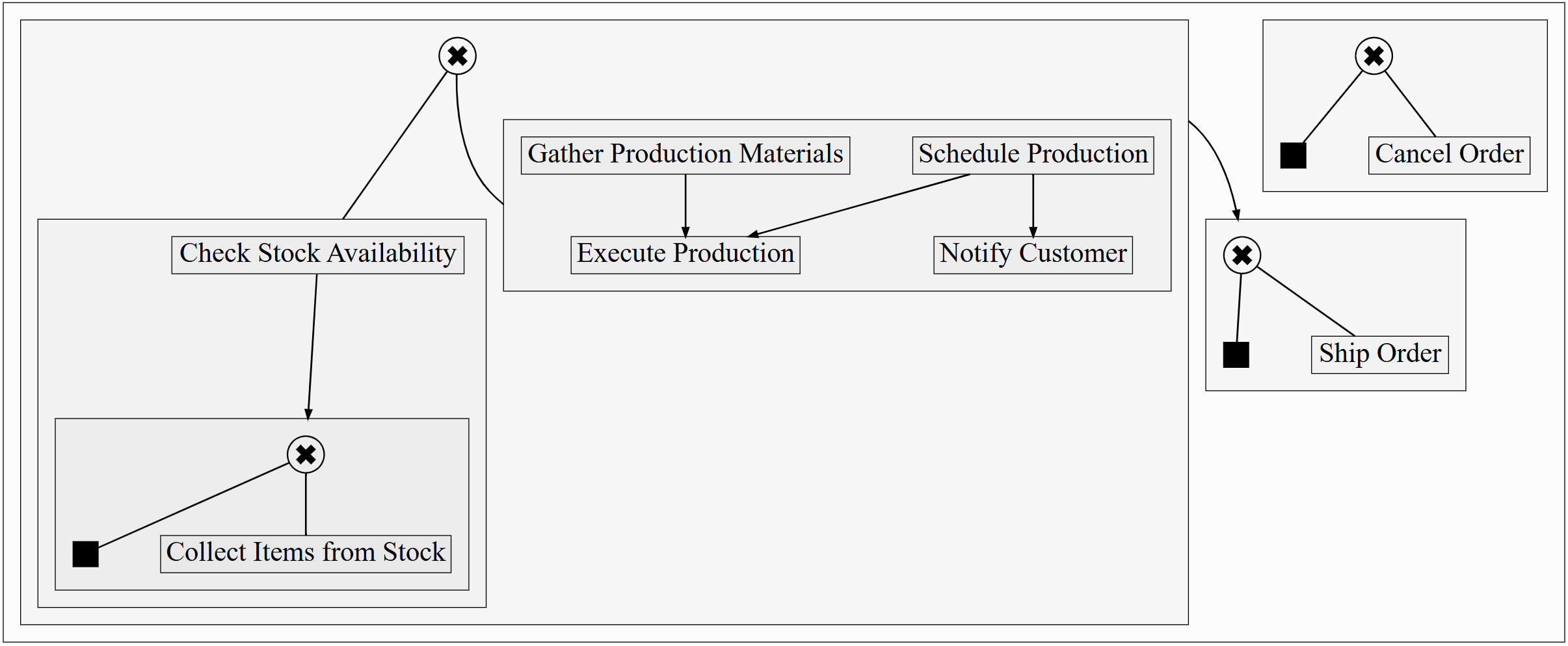}
        \caption{POWL model discovered with the POWL Inductive Miner \cite{DBLP:conf/icpm/KouraniSA23}.}
        \label{fig:powl}
    \end{subfigure}
    
    \caption{Process models for a retailer’s order fulfillment process. \label{fig:models1}}
\end{figure}

The \textit{Partially Ordered Workflow Language} (POWL) \cite{DBLP:conf/bpm/KouraniZ23} extends the capabilities of process trees. POWL relaxes the block-structure requirement for concurrency by using partial orders instead of the concurrency control-flow operator. A partial order specifies a set of children (i.e., sub-models) that must all be executed, but only imposes a partial ordering constraint on their execution, allowing for flexible interleavings. The Inductive Miner was extended to leverage POWL, allowing for the discovery of process models with non-block-structured concurrency components. For example, \autoref{fig:powl} shows a POWL model discovered using the POWL Inductive Miner from \cite{DBLP:conf/icpm/KouraniSA23} for our running example. Unlike the process tree, the discovered POWL model can precisely describe the dependencies between the four activities involved in the production sub-process. These dependencies are correctly described using a compact partial order with three ordering edges: Gather Production Materials $\rightarrow$ Execute Production, Schedule Production $\rightarrow$ Execute Production, Schedule Production $\rightarrow$ Notify Customer.

While POWL eliminates the block-structure limitation for concurrency, this limitation remains for decision points (represented by the XOR operator in both process trees and standard POWL models). In our running example in \autoref{fig:bpmn}, there is an initial choice between following the in-stock orders path or the production path. Then, within the in-stock orders path, there is another decision: either cancel the order if the item is out of stock, or join the other path leading to the shipping step. Both the base Inductive Miner and the POWL Inductive Miner fail to model this behavior precisely due to their lack of support for non-block-structured decision points. For instance, the discovered process tree (cf. \autoref{fig:tree}) and POWL model (cf. \autoref{fig:powl}) incorrectly allow the order to be canceled and shipped simultaneously.

Directly-Follows Graphs (DFGs) provide an alternative perspective on process behavior. A DFG is a simple graph where nodes represent activities, and edges represent direct succession relationships. DFGs are computationally efficient to construct from event logs, and they excel at naturally representing complex, non-block-structured branching decisions. For example, the DFG shown in \autoref{fig:dfg} accurately captures the non-block-structured decisions in our process. However, the presence of concurrency in the production sub-process introduces cycles into the DFG, which makes it imprecise. For instance, the DFG allows for skipping the activity ``Execute Production'' or repeating the sequence (``Gather Production Materials'' $\rightarrow$ ``Notify Customer") multiple times.

\begin{figure}[!t]
\centering
    
     \begin{subfigure}{0.39\textwidth}
        \centering  \includegraphics[width=0.95\textwidth]{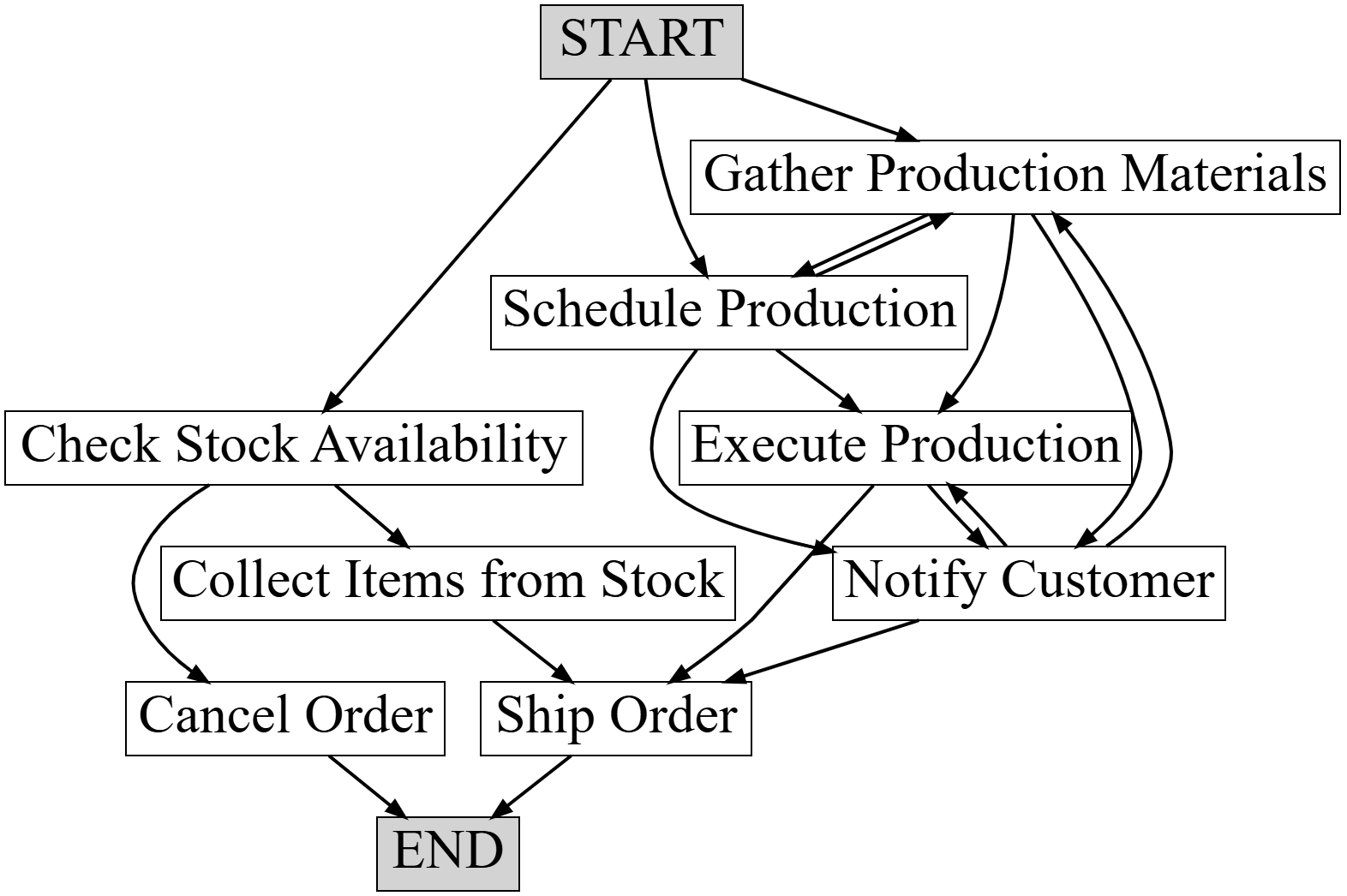}
        \caption{Directly-follows graph.}
        \label{fig:dfg}
    \end{subfigure}
    \begin{subfigure}{0.60\textwidth}
        \centering      \includegraphics[width=\textwidth]{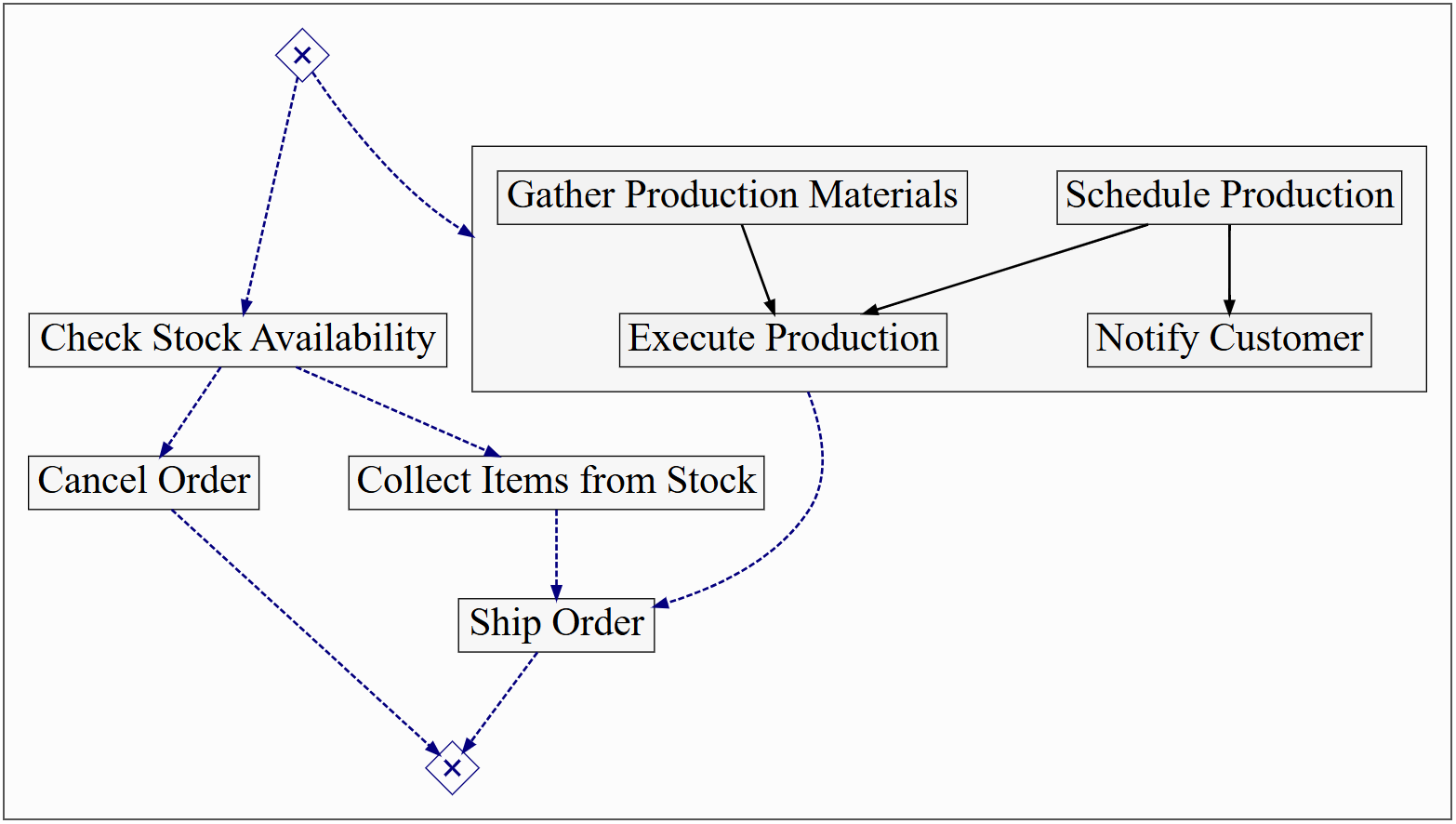}
        \caption{POWL 2.0 model.}
        \label{fig:powl20}
    \end{subfigure}
    
    \caption{Process models for a retailer’s order fulfillment process. \label{fig:models2}}
\end{figure}

Our example highlights the strengths of DFGs in modeling decision paths and their tendency to overgeneralize behavior when cycles are present. In a DFG, cycles arise from concurrency or repeated behaviors. POWL provides dedicated, semantically precise operators for these structures (partial orders and loops). This motivates the integration of a graph-based representation similar to DFGs for handling choices within the hierarchical structure of POWL, leveraging the strengths of both approaches while mitigating their individual limitations. Building upon this idea, this paper introduces POWL 2.0, an extension of POWL that enhances its capabilities to model non-block-structured decisions. Specifically, POWL 2.0 replaces the XOR operator with a more expressive construct: the \emph{choice graph}. Inspired by DFGs, choice graphs provide a structured way to represent complex, non-block-structured choices within POWL’s hierarchical framework while preserving desirable properties such as soundness.

\autoref{fig:powl20} illustrates a POWL 2.0 model discovered for our running example. The edges of choice graphs are visualized using blue dashed arcs to distinguish them from the solid black arcs in partial orders. In this process model, a choice graph captures the initial choice between the in-stock orders path and the production path, as well as the subsequent choice within the in-stock orders path—either to cancel the order or to join the other path and proceed to the shipment step. The production path is modeled using a partial order, correctly representing the dependencies between its activities. This example demonstrates POWL 2.0’s ability to accurately capture intricate choice behavior that process trees and standard POWL models struggle with, while retaining precise concurrency handling through partial orders.

The remainder of this paper is structured as follows. \autoref{sec:rel} discusses related work, and \autoref{sec:pre} introduces necessary preliminaries. In \autoref{sec:powl2}, we formally define POWL 2.0 and discuss its quality guarantees. In \autoref{sec:disc}, we present a discovery algorithm that extends the Inductive Miner to handle choice graphs, and we prove that this approach preserves the fitness guarantee of the Inductive Miner. \autoref{sec:ev} presents an experimental evaluation of the proposed approach. \autoref{sec:conc} concludes the paper and discusses future work.

\section{Related Work}\label{sec:rel}

A wide range of process discovery techniques has been developed. For a comprehensive overview, we refer to \cite{discovery2} and \cite{discovery}. Approaches such as the Inductive Miner \cite{sander} and the Evolutionary Tree Miner \cite{buijs2012genetic} generate block-structured process models, striking a balance between model quality and simplicity. In \cite{DBLP:conf/bpm/KouraniZ23}, POWL models were introduced, enabling the discovery of non-block-structured concurrency with the Inductive Miner. Further adaptations of the Inductive Miner for discovering POWL models were proposed in \cite{DBLP:conf/icpm/KouraniSA23} and \cite{DBLP:journals/is/KouraniZSA25}.

Several approaches discover process models directly in more flexible notations, such as Petri nets and BPMN. For instance, the Split Miner \cite{splitMiner} constructs a BPMN model by selectively adding split and join gateways based on log dependencies and frequencies. While the Split Miner can capture complex, non-block-structured dependencies, it does not guarantee soundness. Region-based mining techniques (e.g., \cite{bergenthum2007process} and \cite{carmona2008region}) generate Petri nets without enforcing block-structure constraints. However, these methods are sensitive to noise and can produce overly complex models or suffer from state-space explosion when applied to large event logs, limiting their scalability.

Approaches for directly translating DFGs into Petri nets have been explored (e.g., \cite{DBLP:conf/icpm/LeemansPW19}).
In \cite{POwithChoice2} and \cite{POwithChoice1}, methods are proposed for discovering prime event structures that integrate conflict relations into partially ordered graphs. These conflict relations offer a flexible way to represent non-block-structured decisions, improving the expressiveness of the resulting models.

The idea of combining different modeling methods to leverage their respective strengths has been explored in various contexts. In \cite{DBLP:conf/otm/LeemansTH18}, a discovery approach is proposed that combines multiple tree-based techniques. Furthermore, Petri nets are extended with additional causal edges in \cite{hybridPN2}, while the approach in \cite{DBLP:conf/otm/SlaatsSMR16} combines imperative process models with declarative constraints.

\section{Preliminaries}\label{sec:pre}
This section presents fundamental notations that will be used throughout the paper.

\subsection{Basic Notation}
A sequence over a set \( X \) is an ordered list of elements from \( X \), expressed as \( \sigma = \langle \sigma(1), \dots, \sigma(n) \rangle \). The length of \( \sigma \) is denoted by \( \card{\sigma} = n \), and the set of all sequences over a set \( X \) is denoted by \( X^* \). Given two sequences \( \sigma_1 \) and \( \sigma_2 \), their concatenation is written as \( \sigma_1 \cdot \sigma_2 \), for instance, \( \langle x_1 \rangle \cdot \langle x_2, x_1 \rangle = \langle x_1, x_2, x_1 \rangle \). For two sets of sequences \( L_1 \) and \( L_2 \), we define their concatenation as \( L_1 \cdot L_2 = \{\sigma_1 \cdot \sigma_2 \mid \sigma_1 \in L_1, \ \sigma_2 \in L_2\} \). The projection of a sequence \( \sigma \) onto a set \( Y \) is denoted by \( \sigma{\upharpoonright}_Y \), for example, \( \langle x_{1}, x_{2}, x_{1} \rangle {\upharpoonright}_{\{x_{1}, x_{3}\}} = \langle x_{1}, x_{1} \rangle \).

A \emph{multi-set} extends the concept of a set by keeping track of the multiplicities of its elements. A multi-set over a set \( X \) is represented as \( M = [{x_{1}}^{c_1}, \dots, {x_{n}}^{c_n}] \), where \( x_1, \dots, x_n \in X \) are the elements of \( M \) (denoted as \( x_i \in M \) for \( 1 \leq i \leq n \)), and \( M(x_i) = c_i \geq 1 \) specifies the multiplicity of \( x_i \) for each \( 1 \leq i \leq n \). We use \( \bag(X) \) to denote the set of all multi-sets over a set \( X \).

For a set \( X \), a \emph{partition} of \( X \) of size \( n \geq 1 \) is a set of subsets \( P = \{X_1, \dots, X_n\} \) such that \( X = X_1 \cup \dots \cup X_n \), \( X_i \neq \emptyset \), and \( X_i \cap X_j = \emptyset \) for all \( 1 \leq i < j \leq n \). For any \( x \in X \), we use \( P_x \) to denote the subset in the partition (also called a \emph{part}) that contains \( x \), i.e., \( P_x \in \{X_1, \dots, X_n\} \) and \( x \in P_x \). For example, consider the partition \( P = \{\{a, b\}, \{c\}\} \) of the set \( \{a, b, c\} \). Then, we have \( P_a = P_b = \{a, b\} \) and \( P_c = \{c\} \).

Let \( \po \subseteq X \times X \) be a binary relation over a set \( X \). We write \( x_1 \po x_2 \) to denote \( (x_1, x_2) \in \po \) and \( x_1 \notpo x_2 \) to indicate \( (x_1, x_2) \notin \po \). We define the \emph{transitive closure} of $\po$ as $\closure\po = \{(x, y) \mid \exists_{x_1, \dots, x_n \in X} \ x = x_1 \ \wedge \ y = x_n \ \wedge \forall_{1\leq i<n} x_i \po x_{i+1}\}$.

A \emph{strict partial order} (or simply \emph{partial order}) over a set \( X \) is a binary relation that satisfies \emph{irreflexivity} (\( x \notpo x \) for all \( x \in X \)) and \emph{transitivity} (\( x_1 \po x_2 \wedge x_2 \po x_3 \Rightarrow x_1 \po x_3 \)). These properties together imply \emph{asymmetry} (\( x_1 \po x_2 \Rightarrow x_2 \notpo x_1 \)). For \( n \geq 2 \), \( \Orders{n} \) denotes the set of all partial orders over \( \{1, \dots, n\} \). Given a set \( X = \{x_1, \dots, x_n\} \) with \( n \geq 2 \) and a partial order \( \po \in \Orders{n} \), we use \( \po(x_1, \dots, x_n) \) to denote the partial order \( \po' \) over \( X \) defined as follows: $i \po j \Leftrightarrow x_i \po' x_j \text{ for all } i, j \in \{1, \dots, n\}$.

Let \( \sigma_1, \dots, \sigma_n \in X^* \) be sequences over a set \( X \) with \( n \geq 2 \), and let \( \po \in \Orders{n} \). The \emph{order-preserving shuffle operator} \( \shuffle_{\po} \) produces the set of sequences obtained by interleaving \( \sigma_1, \dots, \sigma_n \) while maintaining both the partial order \( \po \) among the sequences and the inherent sequential order within each sequence. For example, consider the sequences \( \sigma_1 = \langle a, b\rangle \), \( \sigma_2 = \langle c\rangle \), \( \sigma_3 = \langle d, e\rangle \), and the partial order \( \po = \{(1, 2), (1, 3)\} \in \Orders{3} \). Then, the set of valid interleavings is $\shuffle_{\po}(\sigma_1, \sigma_2, \sigma_3) = \{\langle a, b, c, d, e\rangle, \langle a, b, d, c, e\rangle, \langle a, b, d, e, c\rangle\}$.

\subsection{Event Log}
We use \( \ActUniverse \) to denote the set of all activities. Additionally, we introduce \( \tau \notin \ActUniverse \) to denote the \emph{silent activity}, which is often used to model choices between executing or skipping a path in a process model.

An \emph{event log} \( L \in \bag(\ActUniverse^*)\) is a multi-set of sequences of activities. A \emph{trace} \( \sigma \in L \) represents the execution of a single process instance as a sequence of activities. Given an event log \( L \), we define the following notations:

\begin{itemize}
    \item \( \Sigma_{L} = \{a \in \sigma \mid \sigma \in L \} \) is the set of activities appearing in \( L \).
    
    \item \( \startL{L} = \{\sigma(1) \mid \sigma \in L\} \) is the set of \emph{start activities} in \( L \).

    \item \( \endL{L} = \{\sigma(\card{\sigma}) \mid \sigma \in L \} \) is the set of \emph{end activities} in \( L \).

    \item \( \dfg \subseteq \Sigma_{L} {\times} \Sigma_{L} \) is the \emph{Directly-Follows Graph (DFG) relation}, defined as follows: 
    \[
    a \dfg b \quad \text{iff} \quad \exists_{\sigma \in L, 1 \leq i < \card{\sigma}} \ \sigma(i) = a \ \wedge \ \sigma(i+1) = b.
    \]

\end{itemize}

For example, consider the event log \( L_1 =  [\langle a,b,c \rangle^3, \langle a,b,d \rangle^2] \), which consists of five traces. The corresponding sets are \( \Sigma_{L_1} =  \{a,b,c,d\} \), \( \startL{L_1} =  \{a\} \), and \( \endL{L_1} = \{c, d\} \). The directly-follows graph of \( L_1 \) is \( \dfg = \{(a,b), (b,c), (b,d)\} \).

\section{Extending POWL with Choice Graphs}\label{sec:powl2}
In this section, we introduce a new class of models that extends POWL, enabling a more expressive representation of branching behavior. Furthermore, we discuss the guarantees and limitations of this new class.

A POWL model \cite{DBLP:conf/bpm/KouraniZ23} is constructed recursively from a set of activities, combined using partial orders and the control flow operators \( \xor \) and \( \Loop \). The operator \( \xor \) represents an exclusive choice between submodels, whereas \( \Loop \) captures cyclic behavior between two submodels: the \textit{do-part} is executed first, and each execution of the \textit{redo-part} is followed by another execution of the do-part. In a partial order, all submodels are executed while preserving the specified execution order.

To enable non-block-structured decisions in POWL, we introduce \emph{choice graphs}, a structured approach to modeling exclusive choice paths in processes. A choice graph consists of a set of nodes connected by directed edges such that each node lies on a path from a designated \emph{start node} to a designated \emph{end node}.

\begin{definition}[Choice Graph]\label{def:choice_graph}
A \emph{choice graph} over a set of nodes $X$ is a tuple $G = (N, E)$ where:
\begin{itemize}
    \item $N = X \cup \{\xorstart{N}, \xorend{N}\}$ with two artificial start and end nodes $\xorstart{N}, \xorend{N} \notin X$.
    \item $E \subseteq N \times N$ is binary relation over $N$.
    \item $\xorstart{N}$ is the unique start node, i.e., $\{\xorstart{N}\} = \{x \in N \ | \ (N \times \{x\}) \cap E = \emptyset\}$.
    \item $\xorend{N}$ is the unique end node, i.e., $\{\xorend{N}\} = \{x \in N \ | \ ( \{x\} \times N) \cap E = \emptyset\}$.
    \item Every node is on a connected path from $\xorstart{N}$ to $\xorend{N}$.

\end{itemize}
\end{definition}
We use $\allDG(X)$ to denote the set of all choice graphs over a set $X$. A choice graph represents a collection of possible execution paths, each beginning at the start node and ending at the end node. Formally, for a choice graph $G = (N, E) \in \allDG(X)$ over a set $X$, we define the set of all paths in $G$ as follows:
\[
\paths{G} = \{ \langle x_1, ..., x_n \rangle \in X^* \ | \ (\xorstart{N}, x_1), (x_1, x_2), ..., (x_{n-1}, x_n), (x_n, \xorend{N}) \in E\}.
\]

With choice graphs established, we now define \emph{POWL 2.0}, an extended version of POWL that supports more complex decision-making structures while preserving its fundamental principles. The key difference from standard POWL is that exclusive choices between submodels are now represented using choice graphs instead of the \xor-operator.

\begin{definition}[POWL 2.0]\label{def:ext_powl}
A POWL model is recursively defined as follows:
\begin{itemize}
    \item An activity $a \in \ActUniverse \cup \{\tau\}$ is a POWL model.
    \item Let $\SetPOWL =  \{\powl_1, ..., \powl_n\}$ be a set of ${n \geq 2}$ POWL models.
    
    \begin{itemize}
        \item $\Loop(\powl_1, \powl_2)$ is a POWL model.
        \item A choice graph $G \in \allDG(\SetPOWL)$ is a POWL model.
        \item For a partial order $\po \in \Orders{n}$, $\po(\powl_1, ..., \powl_n)$ is a POWL model.
    \end{itemize}
\end{itemize}
\end{definition}

Similar to the original POWL framework \cite{DBLP:conf/bpm/KouraniZ23}, we define the semantics of a POWL 2.0 model recursively based on the semantics of its operators. The introduction of choice graphs alters how choices are represented. The language of a choice graph is defined as the set of sequences obtained by concatenating sequences from the languages of the submodels along a valid path in the choice graph.

\begin{definition}[POWL 2.0 Semantics]\label{def:ext_sem}
The language of a POWL model is recursively defined as follows:
\begin{itemize}

    \item ${\lang(a) = \{\langle a \rangle\}}$ for $a \in \ActUniverse$.
    \item ${\lang(\tau) = \{\langle \rangle\}}$.
    \item $\lang(\Loop(\powl_1, \powl_2)) = \lang(\powl_1) \cdot (\lang(\powl_2) \cdot \lang(\powl_1))^*$.
    \item Let $\SetPOWL =  \{\powl_1, ..., \powl_n\}$ be a set of ${n \geq 2}$ POWL models.
    
    \begin{itemize}     
         \item For $\po \in \Orders{n}$, $\lang(\po(\powl_1, ..., \powl_n)) = \{\sigma \in \shuffle_{\po}(\sigma_1 , ..., \sigma_n) \ | \ \forall_{1 \leq i \leq n} \ \sigma_i \in \lang(\powl_i) \}$.   

         \item For $G \in \allDG(\SetPOWL)$, $\lang(G) = \bigcup_{\path \in \paths{G}} \lang(\path_1) \cdot ... \cdot \lang(\path_{\card{\path}})$.
    \end{itemize}
\end{itemize}
\end{definition}

Note that POWL models can be defined over \emph{transitions} to allow for duplicating activities (i.e., each transition is considered as an instance of an activity). However, we assume a one-to-one mapping between transitions and activities in this paper and we define POWL models over activities for simplicity.

\subsection*{Discussion: Conversion to WF-Nets, Guarantees, Limitations}
POWL 2.0 models can be systematically converted into equivalent Workflow Nets (WF-nets), which, in turn, can be translated into BPMN models. A WF-net \cite{DBLP:journals/topnoc/HeeSW13a} is a Petri net with a unique source place and a unique sink place, where every other place and transition lies on a path from the source to the sink. The conversion algorithm, extending the one for standard POWL from \cite{DBLP:journals/is/KouraniZSA25}, recursively translates each POWL 2.0 construct into a WF-net fragment. Translating a choice graph into a WF-net involves: (i) introducing a unique source and sink place, (ii) recursively converting each sub-model (i.e., node in the choice graph) into its corresponding WF-net, and (iii) adding silent transitions to connect the sub-nets according to the edges of the choice graph. Adapting the conversion algorithm to handle choice graphs preserves its key guarantees: language equivalence and soundness. Formal inductive proofs extending the proofs from \cite{DBLP:journals/is/KouraniZSA25} can be easily constructed.

POWL 2.0's expressiveness places it within the hierarchy of sound WF-net subclasses. A \emph{marked graph} is a Petri net where each place has at most one incoming arc and at most one outgoing arc. A \emph{state machine} is a Petri net where each transition has at most one incoming arc and at most one outgoing arc. Any sound marked graph WF-net can be represented in POWL 2.0 using a partial order, while any sound state machine WF-net can be represented using a choice graph. 

Although POWL 2.0 removes the block-structure requirements for both concurrency and decision components, allowing more complex dependencies within each of those structures, it still enforces a block-structure between different types of operators. In other words, POWL 2.0 does not support structures where decision and concurrency split/join points interleave arbitrarily.

\section{Discovery of POWL 2.0 Models}\label{sec:disc}
In this section, we propose an approach for the discovery of POWL 2.0 model from event logs.

\subsection{Inductive Miner}
The Inductive Miner \cite{sander} is a process discovery algorithm that constructs process trees from event logs in a recursive manner. This approach was extended to discover POWL models by adding support for partial orders. Various variants of the POWL Inductive Miner were proposed in \cite{DBLP:conf/bpm/KouraniZ23}, \cite{DBLP:conf/icpm/KouraniSA23}, and \cite{DBLP:journals/is/KouraniZSA25}, differing primarily in how partial order cuts are identified while maintaining the same overall methodology. Our new methodology can be integrated with any of these variants as the proposed changes do not affect the partial order cut detection step. In the remainder of the paper, we use the term POWL Inductive Miner (\powlMiner) to refer to the variant defined in \cite{DBLP:conf/icpm/KouraniSA23}.

The Inductive Miner follows a recursive, top-down approach. It attempts to identify a \emph{cut}, which involves detecting a behavioral pattern in the event log and partitioning the activities accordingly. The algorithm recognizes different types of cuts that correspond to one of the process tree or POWL operators (\xor, \Loop, \seq, \And, and partial orders). Once a cut is identified, the event log is projected onto the identified parts (i.e., activity subsets), generating smaller sub-logs. The same procedure is then recursively applied to each sub-log until a \emph{base case} is reached—either an empty event log or an event log containing a single activity. In such cases, the base case is trivially converted into a process tree or a POWL model.

If neither a base case nor a cut can be found, the Inductive Miner applies a \emph{fall-through} function, which ensures that the recursion continues. This function may, for example, place an activity that appears in every trace into concurrency with the remaining activities. For a comprehensive overview of the base Inductive Miner and its adaptation to POWL, we refer to \cite{sander} and \cite{DBLP:journals/is/KouraniZSA25}, respectively.

\subsection{Mining for Choice Graphs}
In this section, we extend \powlMiner\ to discover POWL 2.0 models. We achieve this extension by replacing the \xor-cut detection step with a new procedure for detecting choice graphs. This allows for a more expressive representation of choices in process models while preserving the general structure of the inductive mining approach.

To incorporate choice graphs into the discovery process, we introduce the notion of a \emph{choice graph cut}, which involves partitioning the activities and assigning a choice graph structure over the partition to model branching behavior.

\begin{definition}[Choice Graph Cut]
Let $L$ be an event log. A choice graph cut over $L$ is a tuple $(A, G)$ where $A = \{A_1, ... , A_n\}$ is a partition of $\Sigma_{L}$ into $n \geq 2$ parts and $G \in \allDG(A)$ is a choice graph over $A$.
\end{definition}

Not every choice graph cut provides a valid representation of an event log. To ensure correctness, we impose structural constraints that align the choice graph with the event log’s DFG. Additionally, we add an acyclicity requirement. This restriction aims at enabling a more precise representation for repetition and concurrency behaviors using POWL’s dedicated constructs—partial orders for concurrency and loop operators for cyclic behavior.

\paragraph{Notation.} For two subsets of activities $A, B \subseteq \Sigma_{L}$, we extend the notation for the DFG relation as follows:
    \[
    A \dfg B \quad \text{iff} \quad \exists_{a \in A, b \in B} \ a \dfg b.
    \]

\begin{definition}[Valid Choice Graph Cut]\label{def:DGcut:valid}
Let $C = (A=\{A_1, ... , A_n\}, G=(N, E))$ be a choice graph cut over an event log $L$. $C$ is valid if and only if for all $A_i, A_j \in \{A_1, ... , A_n\}$:

\begin{enumerate}
    \item $(A_i \dfg A_j\ \wedge \ A_i \neq A_j)  \Leftrightarrow (A_i, A_j) \in E$.
    \item $A_i \cap \startL{L} \neq \emptyset \iff (\xorstart{N}, A_i) \in E$.
    \item $A_i \cap \endL{L} \neq \emptyset \iff (A_i, \xorend{N}) \in E$.
    \item $\langle\rangle \in L \Leftrightarrow (\xorstart{N}, \xorend{N}) \in E$.
    \item $(A_i \closure{\dfg} A_j\ \wedge \ A_j \closure{\dfg} A_i) \Rightarrow A_i = A_j$.
\end{enumerate}
\end{definition}

The first condition connects DFG edges to the choice graph's edges. The second and third conditions ensure that subsets containing start/end activities connect to the artificial start/end nodes. The forth condition links empty traces in the event log to a direct edge from start to end in the choice graph. The last condition enforces that if two parts are mutually reachable in the DFG, they must be the same part, introducing an acyclicity requirement. 

Note that the requirements (1-4) of \autoref{def:DGcut:valid} uniquely define a binary relation for any given partitioning of activities. In other words, in order to detect a choice graph cut, we need to generate a partition of activities that conforms with the acyclicity requirement and contains at least two parts.

\autoref{alg:dg_detection} generates a candidate partition of activities for a choice graph cut that ensures acyclicity. It achieves this by (i) initially putting every activity in a single part and (2) only merging two parts if they contain activities that are mutually reachable in the DFG. If the returned partition $A$ consists of a single part, then no valid choice graph cut exists. Otherwise, the requirements of \autoref{def:DGcut:valid} define a unique choice graph over $A$.

\begin{algorithm}[!t]
\caption{Generating a candidate partition for a choice graph cut.}\label{alg:dg_detection}
\DontPrintSemicolon
\KwIn{An event log $L$.}
\KwOut{A partition $A$ of the activities of $L$.}
\SetKwFunction{FMain}{MineDG}
\SetKwProg{Fn}{Function}{:}{}
\Fn{\FMain{$L$}}{
    $A \leftarrow \{\{a\} \ | \ a \in \Sigma_{L}\}$  
    
    \For{$(a_1, a_2) \in \Sigma_{L} \times \Sigma_{L}$}{
        \If{$(a_1 \closure{\dfg} a_2\ \wedge \ a_2 \closure{\dfg} a_1)$}{
            $A \leftarrow A \setminus \{A_{a_1}, A_{a_2}\} \cup \{A_{a_1} \cup A_{a_2}\}$  
        }
    }
    \KwRet{$A$}
}
\end{algorithm}

 After identifying a valid choice graph cut, the event log $L$ is projected on the different parts, creating sub-logs. 

\begin{definition}[Choice Graph Projection]\label{def:DGcut:project}
Let $L$ event log $L$ and $A \subseteq \Sigma_{L}$ be a subset of its activities. The choice graph projection of $L$ onto $A$ is defined as $\projDG(L, A) = \{\sigma{\upharpoonright}_A \mid \sigma \in L \land \sigma{\upharpoonright}_A \neq \langle \rangle \}.$

\end{definition}

We use \( \powlMinerDG \) to denote our extended variant of \powlMiner\ that mines for valid choice graph cuts instead of \xor-cuts. If a valid choice graph cut \( (A=\{A_1, ... , A_n\}, G=(N, E)) \) over an event log $L$ is identified, \( \powlMinerDG \) proceeds as follows: 
\begin{enumerate}
    \item The event log $L$ is projected onto each part $A_i$, creating a sub-log $L_i = \projDG(L, A_i)$.

    \item The algorithm is recursively applied on each sub-log $L_i$, creating a POWL model $\powl_i$. 
    
    \item The algorithm maps each part $A_i$ into its corresponding POWL model $\powl_i$ in G, creating a new choice graph $G'$. 
    
    \item The algorithm returns $G'$.
\end{enumerate}

Several reduction rules can be applied to a POWL 2.0 model generated by \powlMinerDG. For example, pure sequential behavior can be discovered using both partial order cuts and choice graph cuts. To ensure consistency, all such sequences are converted into partial orders. Additionally, nodes in a choice graph that share the same preceding and succeeding nodes are grouped together to enhance understandability. These reduction rules do not alter the language of the generated POWL 2.0 model and are therefore considered optional.

To ensure that the discovered POWL 2.0 models adequately represent the given event log, we establish a fitness guarantee for \( \powlMinerDG \). Specifically, we prove that every trace in the event log is included in the language of the discovered model.

\begin{lemma}[Fitness Preservation for Valid Choice Graph Cuts]\label{lem:choice-graph-fitness}
Let $L$ be an event log and let $(A=(A_1, \dots, A_n), G = (N, E))$ be a valid choice graph cut over $L$.  Let $G'$ be the choice graph obtained by replacing each $A_i$ in $G$ with the POWL model $\powl_i = \powlMinerDG(\projDG(L, A_i))$. Assume that $\projDG(L, A_i) \subseteq \lang(\powl_i)$ for all $1 \leq i \leq n$. Then for all $\sigma \in L$, we have $\sigma \in \lang(G')$.
\end{lemma}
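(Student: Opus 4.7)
The plan is to handle two cases depending on whether $\sigma$ is empty. If $\sigma = \langle\rangle$, condition~4 of \autoref{def:DGcut:valid} immediately gives $(\xorstart{N}, \xorend{N}) \in E$, so the empty path lies in $\paths{G'}$ and therefore $\langle\rangle \in \lang(G')$ by the semantics of choice graphs.

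For a non-empty $\sigma$, I would first decompose it into maximal contiguous blocks $\sigma = \sigma^{(1)} \cdot \sigma^{(2)} \cdots \sigma^{(k)}$, where every activity of $\sigma^{(l)}$ belongs to a single part $A_{i_l}$ and consecutive blocks come from different parts ($i_l \neq i_{l+1}$). The critical observation is that the parts $A_{i_1}, \dots, A_{i_k}$ visited in this decomposition are pairwise distinct: each transition between consecutive blocks witnesses $A_{i_l} \dfg A_{i_{l+1}}$, yielding $A_{i_1} \closure{\dfg} \cdots \closure{\dfg} A_{i_k}$, and the acyclicity condition (condition~5) then forbids revisiting any part. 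This uniqueness implies $\sigma{\upharpoonright}_{A_{i_l}} = \sigma^{(l)}$, so $\sigma^{(l)} \in \projDG(L, A_{i_l})$ and by hypothesis $\sigma^{(l)} \in \lang(\powl_{i_l})$.

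Next I would verify that $\langle \powl_{i_1}, \dots, \powl_{i_k}\rangle$ is a valid path in $G'$. Since $\sigma(1) \in A_{i_1} \cap \startL{L}$, condition~2 yields $(\xorstart{N}, A_{i_1}) \in E$; condition~3 symmetrically gives $(A_{i_k}, \xorend{N}) \in E$; and the directly-follows relation between consecutive blocks combined with $A_{i_l} \neq A_{i_{l+1}}$ triggers condition~1 to produce $(A_{i_l}, A_{i_{l+1}}) \in E$. All these edges are preserved when the parts are relabelled as POWL models, making $\langle \powl_{i_1}, \dots, \powl_{i_k}\rangle$ a path in $\paths{G'}$. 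Combining this with the block-wise containments from the previous paragraph, the semantics of choice graphs gives $\sigma \in \lang(\powl_{i_1}) \cdot \lang(\powl_{i_2}) \cdots \lang(\powl_{i_k}) \subseteq \lang(G')$.

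The main obstacle is establishing that each part appears at most once in the block decomposition. Without this uniqueness, $\sigma{\upharpoonright}_{A_i}$ could span several disjoint blocks of $\sigma$, and the inductive fitness hypothesis for $\powl_i$ would no longer yield a factorization of $\sigma$ that matches a single choice-graph path. The acyclicity requirement of \autoref{def:DGcut:valid} is precisely what unlocks this step; the remaining conditions then amount to routine edge-by-edge bookkeeping.
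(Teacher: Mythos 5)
Your proposal is correct and follows essentially the same route as the paper's proof: the same case split on the empty trace, the same path construction from the block decomposition of $\sigma$ using conditions 1--3, and the same appeal to the acyclicity condition to ensure each part is visited at most once so that the projections $\sigma{\upharpoonright}_{A_{i_l}}$ coincide with the contiguous blocks. If anything, your justification of the no-revisit step via the chain $A_{i_1} \closure{\dfg} \cdots \closure{\dfg} A_{i_k}$ and condition~5 is slightly more explicit than the paper's remark that the choice graph is cycle-free.
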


\begin{proof}
Let $\sigma \in L$. We need to show that $\sigma \in \lang(G')$. Recall that the language of the choice graph $G'$ is defined as:
\[
    \lang(G') = \bigcup_{\path'=\langle \powl_1, \dots, \powl_k\rangle \in \paths{G'}} \lang(\powl_1) \cdot ... \cdot \lang(\powl_k).
    \]
This can be rewritten as:
\[
    \lang(G') = \bigcup_{\path = \langle A_1, \dots, A_k\rangle \in \paths{G}} \lang(\powlMinerDG(\projDG(L, A_1))) \cdot ... \cdot \lang(\powlMinerDG(\projDG(L, A_k))).
    \]
We construct a suitable path $\path = \langle A_1, \dots, A_k\rangle \in \paths{G}$ such that $\sigma$ is the concatenation of traces from the languages of the sub-models along that path. We have two cases:

\begin{itemize}
    \item \textbf{Case $\sigma = \langle \rangle$:} By \autoref{def:DGcut:valid}, if the empty trace is in $L$, then $(\xorstart{N}, \xorend{N}) \in E$. This gives us a direct path representing $\langle \rangle$. Consequently, $\langle \rangle \in \lang(G')$.

        \item \textbf{Case $\sigma = \langle a_1, \dots, a_m \rangle$ with $m > 0$:} We build the corresponding path $\path = \langle A_1, \dots, A_k\rangle \in \paths{G}$ ($k \leq m$) as follows:

        \begin{enumerate}
            \item \textbf{Start node:} Since $a_1 \in \startL{L}$, by \autoref{def:DGcut:valid}, it follows that $(\xorstart{N}, A_{a_1}) \in E$. Thus, we start the path with $A_{a_1}$, i.e., $A_1 = A_{a_1}$.

            \item \textbf{Intermediate nodes:} For subsequent activities $a_j$ in $\sigma$ ($2 \leq j \leq m$):

                \begin{itemize}
                    \item \textbf{Case $A_{a_j} = A_{a_{j-1}}$:} We remain within the same part and do not extend the path.
                    
                    \item \textbf{Case $A_{a_j} \neq A_{a_{j-1}}$:}

                    $a_{j-1}$ is directly followed by $a_j$ in $\sigma$, implying $A_{a_{j-1}} \dfg A_{a_j}$. Therefore, we can conclude by the first condition of \autoref{def:DGcut:valid} that $(A_{a_{j-1}}, A_{a_j}) \in E$ holds, and hence we extend our path with $A_{a_j}$.

                \end{itemize}
            
            \item \textbf{End node:} Since $a_m \in \endL{L}$, \autoref{def:DGcut:valid} implies $(A_{a_m}, \xorend{N}) \in E$. This shows that $\path$ is a valid path from $\xorstart{N}$ to $\xorend{N}$ in $G$.

        \end{enumerate}

    Thus, for every $\sigma \in L$, we have constructed a corresponding path $\path = \langle A_1, \dots, A_k\rangle \in \paths{G}$. Now, decompose $\sigma$ into sub-traces $\sigma_1, \dots, \sigma_k$ by projecting it on the different path steps $A_1, \dots, A_k$, i.e., $\sigma_i = \sigma{\upharpoonright}_{A_i} \in \projDG(L, A_i)$. By the assumption of the lemma, each $\sigma_i \in \lang(\powl_i)$. Since a valid choice graph is cycle-free, no part can be visited more than once within a single path. Therefore, concatenating the sub-traces in accordance with the path $\path$ yields $\sigma$. Thus:
    \[
    \sigma = \sigma_1 \cdot ... \cdot \sigma_k \in \lang(\powl_1) \cdot ... \cdot \lang(\powl_k) \subseteq \lang({G'}).
    \]

\end{itemize}
\end{proof}

\begin{theorem}[Fitness Guarantee]\label{thm:fit:dg}
Let $L$ be an event log and $\powl = \powlMinerDG(L)$. Every trace $\sigma \in L$ is included in the model's language, i.e., $\sigma \in \lang(\powlMinerDG(L))$.
\end{theorem}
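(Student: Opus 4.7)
The plan is to prove the theorem by strong induction on the recursion depth of $\powlMinerDG$. Since $\powlMinerDG$ differs from $\powlMiner$ only in replacing the $\xor$-cut detection with the choice graph cut detection, the overall inductive skeleton mirrors the fitness proof of $\powlMiner$ from \cite{DBLP:conf/icpm/KouraniSA23}; the only genuinely new case is the choice graph cut, and that case is exactly what \autoref{lem:choice-graph-fitness} handles.

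For the base cases, I would observe that if $L = [\langle\rangle]$ then $\powlMinerDG$ returns the silent activity $\tau$ with $\lang(\tau) = \{\langle\rangle\}$, and if $L = [\langle a\rangle^k]$ for a single activity $a$ then it returns $a$ with $\langle a \rangle \in \lang(a)$. For the inductive step, fix an arbitrary $\sigma \in L$ and split on which branch of $\powlMinerDG$ fires on $L$: (i) a base case (handled above); (ii) a cut of type sequence, loop, or partial order, each inherited unchanged from $\powlMiner$; (iii) a choice graph cut; or (iv) a fall-through. In every non-base branch, the algorithm produces sub-logs $L_1,\dots,L_n$ by projecting $L$ onto the detected parts, and recursively computes $\powl_i = \powlMinerDG(L_i)$. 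By the induction hypothesis, $\projDG(L, A_i) \subseteq \lang(\powl_i)$ for each $i$ (and analogously for whichever projection operator the cut uses).

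For case (ii), I would cite the fitness arguments already established for $\powlMiner$: the sequence, loop, and partial-order projections together with the induction hypothesis yield $\sigma \in \lang(\powl)$ exactly as in \cite{DBLP:conf/icpm/KouraniSA23,DBLP:journals/is/KouraniZSA25}, since the definitions of these cuts and projections are untouched by our extension. For case (iii), the choice graph cut is the key new case, and \autoref{lem:choice-graph-fitness} applies directly: it takes the assumption $\projDG(L, A_i) \subseteq \lang(\powl_i)$ (which is supplied by the induction hypothesis) and concludes $\sigma \in \lang(G')$, where $G'$ is the POWL 2.0 model returned by $\powlMinerDG$ on $L$. For case (iv), I would briefly note that the fall-throughs used in $\powlMinerDG$ are exactly those of $\powlMiner$ (activity concurrency fall-through, flower model, etc.), each of which is already known to place every trace of $L$ into the language of the returned model when combined with the induction hypothesis on the resulting sub-logs.

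The main obstacle is not really mathematical but bookkeeping: making precise that every non-$\xor$ cut and every fall-through in $\powlMinerDG$ behaves identically to its counterpart in $\powlMiner$, so that their established fitness arguments carry over verbatim. Once that observation is stated, the only substantive work is the choice graph case, and that has already been discharged by \autoref{lem:choice-graph-fitness}. Termination of the induction follows from the same well-founded measure used for $\powlMiner$ (each recursive call strictly decreases either $|\Sigma_L|$ or the multiset size of $L$ under a standard ordering), and the acyclicity condition in \autoref{def:DGcut:valid} ensures that whenever a choice graph cut is applied, each part $A_i$ is a strict subset of $\Sigma_L$, so the recursion on the projected sub-logs is well-defined.
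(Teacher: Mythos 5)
Your proposal is correct and follows essentially the same route as the paper: induction on the recursive structure of $\powlMinerDG$, delegating base cases, standard cuts, and fall-throughs to the established fitness-preservation results for $\powlMiner$, and invoking \autoref{lem:choice-graph-fitness} for the one genuinely new case of a valid choice graph cut. The extra detail you supply on termination and explicit base cases is fine but not needed beyond what the paper's proof already records.
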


\begin{proof}
We prove this theorem by induction on the recursive application of the algorithm as it constructs the POWL model.

\noindent\textbf{Base Case:} If the algorithm detects a base case, then the theorem trivially holds \cite{DBLP:journals/is/KouraniZSA25}.

\noindent\textbf{Inductive Hypothesis:} Assume the theorem holds for smaller sub-logs. This assumption means that any sub-model generated by the recursive application of the algorithm includes all traces of the corresponding sub-log in its language.

\noindent\textbf{Inductive Step:} We distinguish two cases:
   
    \begin{enumerate}
    \item \textbf{If a standard POWL cut is detected or the fall-through function is invoked:} The theorem holds since these cuts and the fall-through function are fitness-preserving \cite{DBLP:journals/is/KouraniZSA25}.
    
    \item \textbf{If a valid choice graph cut $(A=(A_1, \dots, A_n), G)$ is detected:} The algorithm maps each part $A_i$ into its corresponding POWL model $\powl_i = \powlMinerDG(\projDG(L, A_i))$ in G, returning another choice graph $G'$. By the inductive hypothesis, each sub-trace $\sigma_i \in \projDG(L, A_i)$ is in the language of the recursively discovered sub-model, i.e., $\sigma_i \in \lang(\powlMinerDG(\projDG(L, A_i)))$. Therefore, we can apply \autoref{lem:choice-graph-fitness}, which directly gives us that for all $\sigma \in L$, $\sigma \in \lang(G')$.
\end{enumerate}

\noindent Thus, by induction, any trace $\sigma \in L$ is included in $\lang(\powlMinerDG(L))$.
\end{proof}


\section{Evaluation}\label{sec:ev}
We implemented \powlMinerDG\ as an extension to the standard POWL Inductive Miner (\powlMiner). We compare these two approaches, evaluating two variants of each: one without any noise filtering and another variant applying the DFG-based noise filtering approach proposed in \cite{sander}. We use a noise filtering threshold of $0.2$. The experiments are conducted using 17 real-life event logs from \url{https://data.4tu.nl/}. For event logs that include life-cycle information with completion events (i.e., events with the state ``complete''), we only consider these completion events.

We assess both runtime performance and the quality of the generated process models. Model quality is evaluated by converting the discovered models into WF-nets and then applying two standard conformance checking metrics: fitness and precision. Fitness measures how well the model reproduces the behavior recorded in the event log \cite{adriansyah2011conformance}, while precision assesses how strictly the model conforms to the observed behavior \cite{DBLP:journals/isem/AdriansyahMCDA15}. We use the fitness and precision implementations available in PM4Py \cite{berti2023pm4py} and set a timeout of six hours for each conformance checking computation. In addition to fitness and precision, we report the f-score, which is their harmonic mean.

\begin{table}[!t]
\centering
\caption{Evaluation results. Missing values indicate timeouts. Higher quality scores for \powlMinerDG\ compared to \powlMiner\ are highlighted in green, while lower scores are highlighted in red. Significant time improvements are also highlighted in green.\label{tab:ev}}

\smaller
\begin{tabular}{!{\vrule width 1.2pt}l!{\vrule width 1.2pt}cc|cc!{\vrule width 1.2pt}cc|cc|cc|cc!{\vrule width 1.2pt}}
\hlinewd{1.2pt}
\multicolumn{1}{!{\vrule width 1.2pt}c!{\vrule width 1.2pt}}{\multirow{3}{*}{Event Log}} &
\multicolumn{4}{c!{\vrule width 1.2pt}}{\textbf{No Filtering}} &
\multicolumn{8}{c!{\vrule width 1.2pt}}{\textbf{Noise Threshold t=0.2}} \\
 &
  \multicolumn{2}{c|}{Time (sec)} &
  \multicolumn{2}{c!{\vrule width 1.2pt}}{F-Score} & 
  \multicolumn{2}{c|}{Time (sec)} &
  \multicolumn{2}{c|}{Fitness} &
  \multicolumn{2}{c|}{Precision} &
  \multicolumn{2}{c!{\vrule width 1.2pt}}{F-Score}\\
 &
  \powlMiner &
  \powlMinerDG &
  \powlMiner &
  \powlMinerDG &
  \powlMiner &
  \powlMinerDG &
  \powlMiner &
  \powlMinerDG &
  \powlMiner &
  \powlMinerDG &
  \powlMiner &
  \powlMinerDG\\
  \hlinewd{1.2pt}
BPIC2012 &
  19.3 &
  18.6 &
  0.27 &
  \cellcolor[HTML]{C6EFCE}{\color[HTML]{006100} 0.28} &
  11.0 &
  \cellcolor[HTML]{C6EFCE}{\color[HTML]{006100} 1.9} &
  0.98 &
  \cellcolor[HTML]{FFC7CE}{\color[HTML]{9C0006} 0.89} &
  0.26 &
  \cellcolor[HTML]{C6EFCE}{\color[HTML]{006100} 0.47} &
  0.41 &
  \cellcolor[HTML]{C6EFCE}{\color[HTML]{006100} 0.62} \\
BPIC2013 - Cl. Problems &
  0.0 &
  0.1 &
  0.88 &
  0.88 &
  0.1 &
  0.1 &
  0.99 &
  0.99 &
  0.95 &
  0.95 &
  0.97 &
  0.97 \\
BPIC2013 - Incidents &
  0.4 &
  0.6 &
  0.77 &
  0.77 &
  0.5 &
  0.5 &
  0.93 &
  \cellcolor[HTML]{FFC7CE}{\color[HTML]{9C0006} 0.91} &
  0.72 &
  \cellcolor[HTML]{C6EFCE}{\color[HTML]{006100} 0.87} &
  0.81 &
  \cellcolor[HTML]{C6EFCE}{\color[HTML]{006100} 0.89} \\
BPIC2013 - Open Problems &
  0.0 &
  0.0 &
  0.95 &
  0.95 &
  0.0 &
  0.0 &
  0.83 &
  0.83 &
  0.91 &
  0.91 &
  0.87 &
  0.87 \\
BPIC2017 &
  15.2 &
  14.8 &
  0.45 &
  \cellcolor[HTML]{C6EFCE}{\color[HTML]{006100} 0.47} &
  15.2 &
  \cellcolor[HTML]{C6EFCE}{\color[HTML]{006100} 2.7} &
  0.99 &
  \cellcolor[HTML]{FFC7CE}{\color[HTML]{9C0006} 0.98} &
  0.32 &
  \cellcolor[HTML]{C6EFCE}{\color[HTML]{006100} 0.55} &
  0.48 &
  \cellcolor[HTML]{C6EFCE}{\color[HTML]{006100} 0.70} \\
BPIC2017 - Offer Log &
  0.4 &
  0.3 &
  0.91 &
  \cellcolor[HTML]{C6EFCE}{\color[HTML]{006100} 1.00} &
  0.5 &
  0.5 &
  1.00 &
  1.00 &
  0.90 &
  \cellcolor[HTML]{C6EFCE}{\color[HTML]{006100} 1.00} &
  0.94 &
  \cellcolor[HTML]{C6EFCE}{\color[HTML]{006100} 1.00} \\
BPIC2018 &
  606.2 &
  601.0 &
  $-$ &
  $-$ &
  192.5 &
  189.8 &
  $-$ &
  $-$ &
  $-$ &
  $-$ &
  $-$ &
  $-$ \\
BPIC2019 &
  200.3 &
  198.5 &
  $-$ &
  $-$ &
  287.3 &
  \cellcolor[HTML]{C6EFCE}{\color[HTML]{006100} 20.2} &
  $-$ &
  $-$ &
  $-$ &
  $-$ &
  $-$ &
  $-$ \\
BPIC2020 - Dom. Decl. &
  0.3 &
  0.3 &
  0.56 &
  \cellcolor[HTML]{C6EFCE}{\color[HTML]{006100} 0.63} &
  0.1 &
  0.1 &
  0.93 &
  \cellcolor[HTML]{FFC7CE}{\color[HTML]{9C0006} 0.90} &
  0.38 &
  \cellcolor[HTML]{C6EFCE}{\color[HTML]{006100} 0.62} &
  0.54 &
  \cellcolor[HTML]{C6EFCE}{\color[HTML]{006100} 0.74} \\
BPIC2020 - Int. Decl. &
  2.3 &
  2.3 &
  0.24 &
  \cellcolor[HTML]{C6EFCE}{\color[HTML]{006100} 0.31} &
  0.7 &
  0.4 &
  0.88 &
  \cellcolor[HTML]{FFC7CE}{\color[HTML]{9C0006} 0.84} &
  0.34 &
  \cellcolor[HTML]{C6EFCE}{\color[HTML]{006100} 0.58} &
  0.50 &
  \cellcolor[HTML]{C6EFCE}{\color[HTML]{006100} 0.68} \\
BPIC2020 - Permit Log &
  22.0 &
  21.9 &
  0.16 &
  \cellcolor[HTML]{C6EFCE}{\color[HTML]{006100} 0.17} &
  3.7 &
  0.9 &
  0.79 &
  \cellcolor[HTML]{C6EFCE}{\color[HTML]{006100} 0.81} &
  0.19 &
  \cellcolor[HTML]{C6EFCE}{\color[HTML]{006100} 0.63} &
  0.31 &
  \cellcolor[HTML]{C6EFCE}{\color[HTML]{006100} 0.71} \\
BPIC2020 - Prepaid Travel &
  1.4 &
  1.4 &
  0.22 &
  0.22 &
  0.3 &
  0.1 &
  0.87 &
  \cellcolor[HTML]{C6EFCE}{\color[HTML]{006100} 0.88} &
  0.42 &
  \cellcolor[HTML]{C6EFCE}{\color[HTML]{006100} 0.74} &
  0.56 &
  \cellcolor[HTML]{C6EFCE}{\color[HTML]{006100} 0.80} \\
BPIC2020 - Req. Payment &
  0.2 &
  0.2 &
  0.45 &
  \cellcolor[HTML]{C6EFCE}{\color[HTML]{006100} 0.51} &
  0.1 &
  0.1 &
  0.91 &
  \cellcolor[HTML]{FFC7CE}{\color[HTML]{9C0006} 0.89} &
  0.39 &
  \cellcolor[HTML]{C6EFCE}{\color[HTML]{006100} 0.47} &
  0.54 &
  \cellcolor[HTML]{C6EFCE}{\color[HTML]{006100} 0.61} \\
Hospital Billing &
  2.1 &
  2.0 &
  0.67 &
  0.67 &
  2.0 &
  1.2 &
  0.96 &
  \cellcolor[HTML]{C6EFCE}{\color[HTML]{006100} 0.97} &
  0.56 &
  \cellcolor[HTML]{C6EFCE}{\color[HTML]{006100} 0.93} &
  0.71 &
  \cellcolor[HTML]{C6EFCE}{\color[HTML]{006100} 0.95} \\
Receipt Phase WABO &
  0.5 &
  0.5 &
  0.28 &
  \cellcolor[HTML]{C6EFCE}{\color[HTML]{006100} 0.30} &
  0.1 &
  0.0 &
  0.81 &
  \cellcolor[HTML]{C6EFCE}{\color[HTML]{006100} 0.88} &
  0.25 &
  \cellcolor[HTML]{C6EFCE}{\color[HTML]{006100} 0.55} &
  0.38 &
  \cellcolor[HTML]{C6EFCE}{\color[HTML]{006100} 0.67} \\
Road Traffic Fine &
  1.0 &
  0.9 &
  0.74 &
  0.74 &
  0.9 &
  0.9 &
  0.87 &
  \cellcolor[HTML]{C6EFCE}{\color[HTML]{006100} 0.98} &
  0.78 &
  \cellcolor[HTML]{C6EFCE}{\color[HTML]{006100} 1.00} &
  0.82 &
  \cellcolor[HTML]{C6EFCE}{\color[HTML]{006100} 0.99} \\
Sepsis Cases &
  1.2 &
  1.2 &
  0.44 &
  0.44 &
  0.5 &
  0.2 &
  0.91 &
  0.91 &
  0.40 &
  \cellcolor[HTML]{C6EFCE}{\color[HTML]{006100} 0.54} &
  0.56 &
  \cellcolor[HTML]{C6EFCE}{\color[HTML]{006100} 0.68}\\
  \hlinewd{1.2pt}
\end{tabular}
\end{table} 

All discovered process models are available at \url{https://zenodo.org/records/15050973}. The evaluation results are summarized in \autoref{tab:ev}. For the case with no filtering, we only report the f-scores as quality indicators since all discovered models achieved perfect fitness, which is guaranteed by both \powlMiner\ and \powlMinerDG.

Overall, we observe that the introduction of choice graphs has no negative impact on runtime performance. In cases without noise filtering, there is no significant difference in execution time between \powlMiner\ and \powlMinerDG. However, with the noise filtering threshold of 0.2, \powlMinerDG\ exhibited improved runtime performance for some event logs. For instance, for the BPIC2019 event log, the execution time decreased from 287 seconds with \powlMiner\ to 20 seconds with \powlMinerDG. This demonstrates that mining for choice graphs does not compromise the inherent scalability of the Inductive Miner; in fact, it can improve runtime performance when valid choice graphs are detected.

Regarding quality, we observe that \powlMinerDG\ achieved equal or higher precision and f-score compared to \powlMiner\ in all cases. In 21 out of the 30 cases with no timeouts, \powlMinerDG\ led to more precise models. In some instances, fitness dropped slightly, but these decreases were compensated by notable increases in precision, leading to consistently higher f-scores. In other cases, such as the Receipt Phase WABO log, mining for decision graphs with noise filtering led to improvements in both fitness (from 0.81 to 0.88) and precision (from 0.25 to 0.55).

\autoref{fig:discoveredModels} presents the process models discovered for the BPIC2017 - Offer Log, converted into BPMN. In this case, mining for choice graphs increased the f-score from 0.91 to 1.0. One notable improvement for this event log is that the model discovered by \powlMiner\ allows an offer to be accepted immediately after creation, while the more precise model generated by \powlMinerDG\ ensures that offers can only be accepted after they have been sent and subsequently returned. This example highlights the advantage of incorporating choice graphs in process discovery, leading to improved precision without compromising scalability.

\begin{figure}[!t]
\centering
    
     \begin{subfigure}{\textwidth}
        \centering  
        \includegraphics[width=0.9\textwidth]{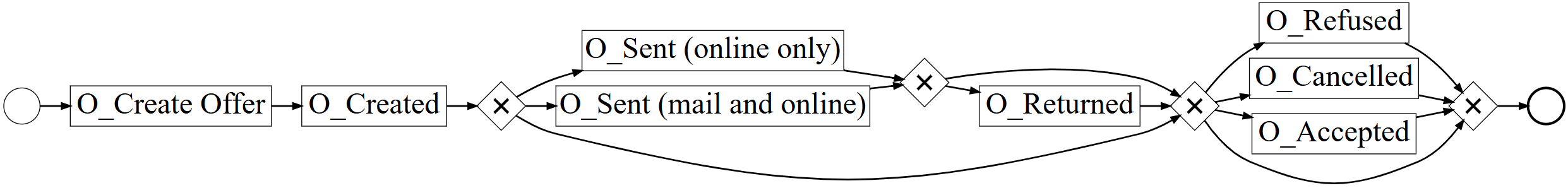}
        \caption{By \powlMiner.}
    \end{subfigure}
    
    \begin{subfigure}{\textwidth}
        \centering      
        \includegraphics[width=\textwidth]{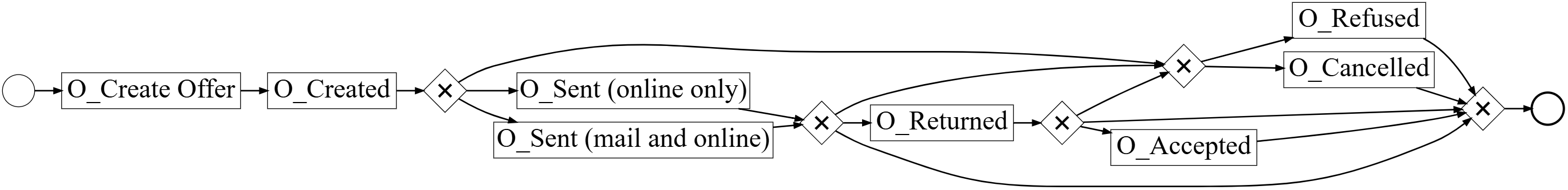}
        \caption{By \powlMinerDG.}
    \end{subfigure}
    
    \caption{Process models discovered for the BPIC2017 - Offer Log. \label{fig:discoveredModels}}
\end{figure}

\section{Conclusion} \label{sec:conc}
This paper introduced POWL 2.0, an extension of the Partially Ordered Workflow Language (POWL) designed to enable the modeling of non-block-structured decisions. By replacing the traditional exclusive choice operator with choice graphs, POWL 2.0 provides a more expressive and flexible way to represent complex branching logic within the hierarchical framework of POWL. We presented a discovery algorithm that extends the Inductive Miner to mine for acyclic choice graphs, and we formally proved that this algorithm preserves the fitness guarantee of the Inductive Miner. Our experimental evaluation demonstrated the ability of choice graphs to capture intricate decision-making patterns in real-world processes.

One direction for future work is to explore further extensions to POWL to capture more complex process structures, while carefully considering the trade-offs between expressiveness, scalability, and the preservation of formal guarantees. While the current discovery approach mines only for acyclic choice graphs, POWL 2.0 itself allows for more general decision structures. Future research will explore the controlled integration of cyclic choice graphs in process discovery. Furthermore, we plan to investigate the integration of POWL with other process mining and business process management techniques beyond process discovery.

\bibliographystyle{splncs04}
\bibliography{lit}

\begin{thebibliography}{10}
\providecommand{\url}[1]{\texttt{#1}}
\providecommand{\urlprefix}{URL }
\providecommand{\doi}[1]{https://doi.org/#1}

\bibitem{hybridPN2}
van~der Aalst, W.M.P., {De Masellis}, R., {Di Francescomarino}, C., Ghidini, C., Kourani, H.: Discovering hybrid process models with bounds on time and complexity: When to be formal and when not? Information Systems  \textbf{116},  102214 (2023)

\bibitem{adriansyah2011conformance}
Adriansyah, A., van Dongen, B.F., van~der Aalst, W.M.: Conformance checking using cost-based fitness analysis. In: 2011 {IEEE} 15th international enterprise distributed object computing conference. pp. 55--64. IEEE (2011)

\bibitem{DBLP:journals/isem/AdriansyahMCDA15}
Adriansyah, A., Munoz{-}Gama, J., Carmona, J., van Dongen, B.F., van~der Aalst, W.M.P.: Measuring precision of modeled behavior. Inf. Syst. {E} Bus. Manag.  \textbf{13}(1),  37--67 (2015)

\bibitem{discovery2}
Augusto, A., Conforti, R., Dumas, M., Rosa, M.L., Maggi, F.M., Marrella, A., Mecella, M., Soo, A.: Automated discovery of process models from event logs: Review and benchmark. {IEEE} Trans. Knowl. Data Eng.  \textbf{31}(4),  686--705 (2019)

\bibitem{splitMiner}
Augusto, A., Conforti, R., Dumas, M., Rosa, M.L., Polyvyanyy, A.: Split miner: automated discovery of accurate and simple business process models from event logs. Knowl. Inf. Syst.  \textbf{59}(2),  251--284 (2019)

\bibitem{POwithChoice1}
van Beest, N.R.T.P., Dumas, M., Garc{\'{\i}}a{-}Ba{\~{n}}uelos, L., Rosa, M.L.: Log delta analysis: Interpretable differencing of business process event logs. In: Business Process Management - 13th International Conference, Proceedings. LNCS, vol.~9253, pp. 386--405. Springer (2015)

\bibitem{bergenthum2007process}
Bergenthum, R., Desel, J., Lorenz, R., Mauser, S.: Process mining based on regions of languages. In: International Conference on Business Process Management. pp. 375--383. Springer (2007)

\bibitem{berti2023pm4py}
Berti, A., van Zelst, S., Schuster, D.: {PM4Py}: A process mining library for python. Software Impacts  \textbf{17},  100556 (2023)

\bibitem{buijs2012genetic}
Buijs, J.C., van Dongen, B.F., van~der Aalst, W.M.P.: A genetic algorithm for discovering process trees. In: 2012 IEEE Congress on Evolutionary Computation. pp.~1--8. IEEE (2012)

\bibitem{carmona2008region}
Carmona, J., Cortadella, J., Kishinevsky, M.: A region-based algorithm for discovering petri nets from event logs. In: Business Process Management: 6th International Conference, BPM 2008, Milan, Italy, September 2-4, 2008. Proceedings 6. pp. 358--373. Springer (2008)

\bibitem{discovery}
van Dongen, B.F., de~Medeiros, A.K.A., Wen, L.: Process mining: Overview and outlook of {Petri} net discovery algorithms. Trans. {Petri} Nets Other Model. Concurr.  \textbf{2},  225--242 (2009)

\bibitem{POwithChoice2}
Dumas, M., Garc{\'{\i}}a{-}Ba{\~{n}}uelos, L.: Process mining reloaded: Event structures as a unified representation of process models and event logs. In: Application and Theory of Petri Nets and Concurrency - 36th International Conference, Proceedings. LNCS, vol.~9115, pp. 33--48. Springer (2015)

\bibitem{DBLP:conf/icpm/KouraniSA23}
Kourani, H., Schuster, D., van~der Aalst, W.M.P.: Scalable discovery of partially ordered workflow models with formal guarantees. In: 5th International Conference on Process Mining, {ICPM} 2023, Rome, Italy, October 23-27, 2023. pp. 89--96. {IEEE} (2023)

\bibitem{DBLP:conf/bpm/KouraniZ23}
Kourani, H., van Zelst, S.J.: {POWL:} partially ordered workflow language. In: Business Process Management - 21st International Conference, {BPM} 2023, Utrecht, The Netherlands, September 11-15, 2023, Proceedings. {LNCS}, vol. 14159, pp. 92--108. Springer (2023)

\bibitem{DBLP:journals/is/KouraniZSA25}
Kourani, H., van Zelst, S.J., Schuster, D., van~der Aalst, W.M.P.: Discovering partially ordered workflow models. Inf. Syst.  \textbf{128},  102493 (2025)

\bibitem{sander}
Leemans, S.J.J.: Robust Process Mining with Guarantees - Process Discovery, Conformance Checking and Enhancement, LNBIP, vol.~440. Springer (2022)

\bibitem{DBLP:conf/icpm/LeemansPW19}
Leemans, S.J.J., Poppe, E., Wynn, M.T.: Directly follows-based process mining: Exploration {\&} a case study. In: {ICPM}. pp. 25--32. {IEEE} (2019)

\bibitem{DBLP:conf/otm/LeemansTH18}
Leemans, S.J.J., Tax, N., ter Hofstede, A.H.M.: Indulpet miner: Combining discovery algorithms. In: {OTM} Conferences {(1)}. Lecture Notes in Computer Science, vol. 11229, pp. 97--115. Springer (2018)

\bibitem{DBLP:books/el/15/RosingWCM15}
von Rosing, M., White, S., Cummins, F., de~Man, H.: Business process model and notation - {BPMN}. In: The Complete Business Process Handbook: Body of Knowledge from Process Modeling to BPM, Volume {I}, pp. 429--453. Morgan Kaufmann/Elsevier, Massachusetts, USA (2015)

\bibitem{DBLP:conf/otm/SlaatsSMR16}
Slaats, T., Schunselaar, D.M.M., Maggi, F.M., Reijers, H.A.: The semantics of hybrid process models. In: {OTM} Conferences. Lecture Notes in Computer Science, vol. 10033, pp. 531--551 (2016)

\bibitem{DBLP:journals/topnoc/HeeSW13a}
{van Hee}, K.M., Sidorova, N., {van der Werf}, J.M.E.M.: Business process modeling using {P}etri nets. Trans. {P}etri Nets Other Model. Concurr.  \textbf{7},  116--161 (2013)

\end{thebibliography}

\end{document}